\newtheorem{theorem}{Theorem}
\newtheorem{lem}{Lemma}
\newtheorem{definition}{Definition}
\newcommand{\hatU}{\hat{U}}   
\newcommand{\UU}{UU^\top}   
\newcommand{\UUk}{U_kU_k^\top}   
\newcommand{\UUkk}{U_{k+1}U_{k+1}^\top}   
\newcommand{\Uk}{U_k}   
\newcommand{\Pk}{P_k}   
\newcommand{\Yk}{Y_k}   
\newcommand{\Ukk}{U_{k+1}}   
\newcommand{\Pkk}{P_{k+1}}   
\newcommand{\Ykk}{Y_{k+1}}   
\newcommand{\mukk}{\mu_{k+1}}   
\newcommand{\muk}{\mu_k}   
\newcommand{\gammak}{\gamma_k}   
\newcommand{\0}{\ensuremath{\mathbf{0}}}
\newcommand{\1}{\ensuremath{\mathbf{1}}}
\newcommand{\norm}[1]{\left\|#1\right\|}
\newcommand{\inproduct}[2]{\left\langle#1,#2\right\rangle}
\DeclareMathOperator*{\argmin}{argmin}
\newcommand{\Tr}{\text{Tr}}
\newcommand{\st}{\text{s.t.}}
\newcommand{\Lag}{\mathcal{L}}
\begin{document}
%
\title{Nonconvex Sparse Spectral Clustering by Alternating Direction Method of Multipliers and Its Convergence Analysis}

\author{Canyi Lu$^1$, Jiashi Feng$^1$, Zhouchen Lin$^{2,3}$\thanks{Corresponding author.}, Shuicheng Yan$^{4,1}$\\
	$^1$ Department of Electrical and Computer Engineering, National University of Singapore\\
	$^{2}$ Key Laboratory of Machine Perception (MOE), School of EECS, Peking University\\
	$^3$ Cooperative Medianet Innovation Center, Shanghai Jiao Tong University \\
	$^4$ 360 AI Institute \\
	{\tt\small canyilu@gmail.com, elefjia@nus.edu.sg, zlin@pku.edu.cn, eleyans@nus.edu.sg}
}

\maketitle
\begin{abstract}
	Spectral Clustering (SC) is a widely used  data clustering method which first learns a low-dimensional embedding $U$ of data by computing the eigenvectors of the normalized Laplacian matrix, and then performs k-means on $U^\top$ to get the final clustering result. The Sparse Spectral Clustering (SSC) method extends SC with a sparse regularization on $\UU$ by using the block diagonal structure prior of $\UU$ in the ideal case. However, encouraging  $\UU$ to be sparse leads to a heavily  nonconvex problem which is challenging to solve and the work \cite{lu2016convex}  proposes a convex relaxation in the pursuit of this aim indirectly. However, the convex relaxation generally leads to a loose approximation and the quality of the solution is not clear.  This work instead considers to solve the nonconvex formulation of SSC  which directly encourages  $\UU$ to be sparse. We propose an efficient Alternating Direction Method of Multipliers (ADMM) to solve the nonconvex SSC and provide the convergence guarantee. In particular, we prove that the sequences generated by ADMM always exist a limit point and any limit point is a stationary point. Our analysis does not impose any assumptions on the iterates and thus is practical. Our proposed ADMM for nonconvex problems allows the stepsize to be increasing but upper bounded, and this makes it very efficient in practice. Experimental analysis on several real data sets verifies the effectiveness of our method.
\end{abstract}

\section{Introduction}

Data clustering is one of the most fundamental topics in unsupervised learning and has been widely applied in computer vision, data mining and many others. Clustering aims to divide the unlabeled data
set into groups which consist of similar data points. 
Many clustering methods have been proposed up to now,
e.g. k-means, spectral clustering \cite{ng2002spectral,shi2000normalized} and nonnegative
matrix factorization \cite{lee2001algorithms}. The Spectral Clustering (SC) is  one of the most widely used  methods and it has a lot of applications in computer vision and signal analysis, e.g., image segmentation \cite{shi2000normalized}, motion segmentation \cite{lu2012robust}, and co-clustering problems of words and documents \cite{dhillon2001co}.  

Assume that we are given   $n$ data points $X=[x_1,\cdots,x_n]=[X_1,\cdots,X_k]\in\mathbb{R}^{d\times n}$, where $X_j\in\mathbb{R}^{d\times n_j}$ denotes the $j$-th group  with $n_j$ points, $\sum_{j=1}^kn_j=n$ and $k$ is the number of clusters. SC \cite{ng2002spectral} partitions these $n$ points into $k$ clusters by the following procedures: First, 
compute an affinity matrix $W\in\mathbb{R}^{n\times n}$ with its element $w_{ij}$ measuring the similarity between $x_i$ and $x_j$. Second, construct the normalized Laplacian matrix $L=I-D^{-\frac{1}{2}}W D^{-\frac{1}{2}}$, where $D$ is a diagonal matrix with each diagonal element $d_{ii}=\sum_{j=1}^nw_{ij}$ and $I$ is the identity matrix. Third, compute $U\in\mathbb{R}^{n\times k}$ by solving 
\begin{equation}\label{sc}
\min_{U\in\mathbb{R}^{n\times k}}  \  \langle UU^\top,L\rangle, \  \st \    U^\top U=I.
\end{equation}
Finally, compute $\hat{ U}\in\mathbb{R}^{n\times k}$ by normalizing each row of $ U$ to have unit Euclidean length,  treat the rows of $\hat{ U}$ as data points in $\mathbb{R}^k$, and cluster them into $k$ groups by k-means.
Due to the significance of SC, many variants of SC have been proposed based on different ways of affinity matrix $W$ construction and  different normalizations of the Laplacian matrix $L$  \cite{shi2000normalized,von2007tutorial}. 

A recent work \cite{lu2016convex} proposes the Sparse Spectral Clustering (SSC) method which  computes the low-dimensional embedding $U$ in a different way:
\begin{align}\label{ssc0}
\min_{U\in\mathbb{R}^{n\times k}} \  \inproduct{L}{\UU} + \beta \norm{\UU}_0, \ \st \ U^\top U=I,
\end{align}
where $\beta>0$ and  $\norm{\cdot}_0$ is the $\ell_0$-norm  which encourages  $\UU$ to be sparse. The motivation for such a sparse regularizer is that $\UU$ is  block diagonal (thus sparse) when $W$ is block diagonal in the ideal case. 
Consider the ideal case that the affinity matrix $W$ is block diagonal, i.e., $w_{ij}=0$ if $x_i$ and $x_j$ are from different clusters. Let  $C\in\mathbb{R}^{n\times k}$ denotes the indicator matrix whose row entries indicate to which group the points belong. That is, if $x_i$ belongs to the group $l$, $c_{il}=1$ and $c_{ij}=0$ for all $j\neq l$.  Then, for  any orthogonal matrix $R\in\mathbb{R}^{k\times k}$,  we have $\hat{U}=C R$.
In this case,  $ \hatU\hatU^\top$ is block diagonal, i.e.,
\begin{equation*}
\hatU\hatU^\top=CC^\top=\begin{bmatrix}
\1_{n_1}\1_{n_1}^\top & \0 & \cdots  & \0 \\
\0 & \1_{n_2}\1_{n_2}^\top & \cdots & \0 \\
\vdots & \vdots & \ddots & \vdots \\
\0 & \0 & \cdots & \1_{n_k}\1_{n_k}^\top
\end{bmatrix},
\end{equation*}
where $\1_m$ denotes the all one vector of length $m$ and $\0$ is all one vector/matrix of proper size. 
Hence,     $\hatU\hatU^\top$ implies  the true membership of the data clusters and it is naturally {sparse}. Note that $\hat{U}$ is obtained by normalizing each row of $U$ and thus $\UU$ is also sparse. However, such a block diagonal or sparse property may not appear in real applications since the affinity matrix $W$ is usually not block diagonal. This motivates the sparse regularization on $\UU$ and thus leads to the SSC model in (\ref{ssc0}). However, the key challenge is that problem (\ref{ssc0}) is nonconvex and difficult to solve. The work \cite{lu2016convex} proposes a convex relaxation formulation as follows
\begin{equation}\label{cssc}
\min_{ P\in\mathbb{R}^{n\times n}}        \langle P,L\rangle+\beta\|P\|_1, \ \ \text{s.t.} \  \   \0\preceq P\preceq I,\ \Tr(P)=k,
\end{equation}
where the $\ell_1$-norm $\norm{\cdot}_1$ is used as a surrogate of $\ell_0$-norm while the nonconvex constraint consisting of all  the fixed rank projection matrices, i.e., $\{\UU|U^\top U=I\}$, is replaced as its convex hull $\{P\in\mathbb{S}^{n\times n}| 0\preceq P\preceq I, \Tr(P)=k\}$ \cite{fillmore1971some}.  Here, $\mathbb{S}^n$ denotes the set of symmetric matrices. For $A,B\in\mathbb{S}^n$, $A\preceq B$  means that $B-A$ is positive semi-definite. Problem (\ref{cssc}) is convex and the optimal solution can be computed by Alternating Direction Method of Multipliers (ADMM) \cite{gabay1976dual,lu2017unified}, which is efficient in practice. 
After solving (\ref{cssc}) with the solution $P^*$, the low-dimensional embedding $U$ of data $X$ can be approximated by using the first $k$ eigenvectors corresponding to the largest $k$ eigenvalues of $P$. This is equivalent to computing $U$ by solving
\begin{equation}\label{computUssc}
\min_{U\in\mathbb{R}^{n\times k}} \ \norm{P^* - \UU}, \ \st \ U^\top U = I,
\end{equation}
where $\norm{\cdot}$ denotes the Frobenius norm of a matrix.
After obtaining $U$, one is able to cluster the data points into $k$ groups as that in SC.

From the above discussions, it can be seen that a main limitation of the convex SSC relaxation  (\ref{cssc}) is that the obtained solution may be far from optimal to  (\ref{ssc0}). The reason is that the difference $\norm{P^* - \UU}$ in (\ref{computUssc}) is not guaranteed to be 0 or sufficiently small. Thus, enforcing $P$ to be sparse in (\ref{cssc}) does not guarantee a sparse $\UU$ in (\ref{computUssc}). It is obvious that such an issue is mainly caused by the usage of the relaxation of  the convex hull $\{P\in\mathbb{S}^{n\times n}| 0\preceq P\preceq I, \Tr(P)=k\}$ instead of $\{\UU|U^\top U=I\}$. 

In this work, we aim to address the above possibly loose relaxation issue of the convex SSC model by   directly solving the following nonconvex problem
\begin{align}\label{ncvxssc}
\min_{U\in\mathbb{R}^{n\times k}} \  \inproduct{L}{\UU} + g(\UU), \ \st \ U^\top U=I,
\end{align}
where $g:\mathbb{R}^{n\times n}\rightarrow \mathbb{R}$ is a sparse regularizer. The choice of the spare regularizer is not very important in SSC.  We allow  $g$ to be nonconvex (see more detailed assumption in the next section). Problem (\ref{ncvxssc}) is  nonconvex and challenging to solve due to the orthogonal constraint and the complex term $g(\UU)$. This work  proposes to solve it by  Alternating Direction Method of Multipliers (ADMM)  and provide the convergence guarantee. Our proposed ADMM is flexible as we allow the stepsize in ADMM to be monotonically increasing (but upper bounded). Such a choice of stepsize is widely used and has been verified to be effective in improving the convergence of ADMM for convex optimization. Note that we are the first one to use such a choice of stepsize in ADMM for nonconvex problems and provide its support in theory. When characterizing the convergence of ADMM, we show that the augmented Lagrangian function value is monotonically decreasing. This further guarantees that  the sequences generated by the proposed ADMM are bounded and there always exists a limit point and any limit point is a stationary point.  


\begin{algorithm}[t]
	\caption{Solve (\ref{pro}) by ADMM}
	\textbf{Initialize:} $\rho>1$, $\mu_{\max}$, $k=0$, $\Pk$, $\Uk$, $\Yk$, $\muk$. \\
	\textbf{while} not converged \textbf{do}
	\begin{enumerate}
		\item Compute $\Ukk$ by solving (\ref{updateU});
		\item Compute $\Pkk$ by solving (\ref{updateP});
		\item Compute $\Ykk$ by (\ref{updateY});
		\item Compute $\mukk$ by (\ref{updatemu});
		\item $k=k+1$.
	\end{enumerate}
	\textbf{end while}\label{admm}
\end{algorithm}

\section{The Proposed ADMM Algorithm}
In this section,  we present the ADMM algorithm for solving the nonconvex problem (\ref{ncvxssc}). We first reformulate it as the following equivalent problem
\begin{equation}\label{pro}
\begin{split}
&\min_{P\in\mathbb{R}^{n\times n},U\in\mathbb{R}^{n\times k}} \  \inproduct{L}{\UU} + g(P), \\
& \st \ P = \UU, \ U^\top U=I. 
\end{split}
\end{equation}
The standard  augmented Lagrangian function is
\begin{align*}
&\Lag(P,U,Y_1,Y_2,\mu) =   \inproduct{L}{\UU} + g(P) + \inproduct{Y_1}{P-\UU}\notag \\
&+\inproduct{Y_2}{\UU-I} + \frac{\mu}{2} \norm{P-\UU}^2 + \frac{\mu}{2} \norm{U^\top U-I}^2, 
\end{align*}
where $Y_1$ and $Y_2$ are the dual variables and $\mu>0$. However, it is difficult to update $U$ by minimizing the above augmented Lagrangian function when fixing other variables. To update $U$ efficiently, we instead use the following
\emph{partial} augmented Lagrangian function  
\begin{align}
\Lag(P,U,Y,\mu) = &  \inproduct{L}{\UU} + g(P) + \inproduct{Y}{P-\UU}\notag \\
& + \frac{\mu}{2} \norm{P-\UU}^2.
\end{align}
Then we can solve problem (\ref{pro}) by Alternating Direction Method of Multipliers by the following rules.

\noindent 1. Fix $P=\Pk$ and update $U$ by
\begin{align}
\Ukk = & \argmin_{U\in\mathbb{R}^{n\times k}} \ \Lag(\Pk,U,\Yk,\muk), \ \st \ U^\top U=I. \notag \\
= & \argmin_{U} \ \norm{\UU - \Pk +(L-\Yk)/\muk  }^2, \label{updateU}\\
& \st \ U^\top U=I.   \notag
\end{align}
2. Fix $U=\Ukk$ and update $P$ by
\begin{align}
\Pkk =& \argmin_{P}  \ \Lag(P,\Ukk,\Yk,\muk) \notag \\
= & \argmin_{P} \ g(P) + \frac{\muk}{2} \norm{P-\UUkk+\Yk/\muk}^2. \label{updateP}
\end{align}
3. Update the dual variable by
\begin{equation}\label{updateY}
\Ykk = \Yk + \muk (\Pkk - \UUkk).
\end{equation}
4. Update the stepsize $\mu$ by
\begin{equation}\label{updatemu}
\mukk = \min(\mu_{\max},\rho\muk), \ \rho > 1. 
\end{equation}
The whole procedure of ADMM for (\ref{pro}) is given in Algorithm \ref{admm}. It can be seen that the $U$-subproblem (\ref{updateU}) has a closed form solution. The $P$-subproblem (\ref{updateP}) requires computing the proximal mapping of $g$. It   usually has a closed form solution when $g$ is simple.

We would like to emphasize that, for nonconvex optimization, our ADMM  allows the stepsize $\muk$ to be increasing (but upper bounded), while previous nonconvex ADMM methods have to fix it. Such a choice of stepsize has been verified to be effective in improving the convergence of ADMM for convex optimization  in practice and the convergence guarantee is also known \cite{lu2017unified,lin2010augmented}. To the best of our knowledge, this is the first work which uses varying stepsize in ADMM for nonconvex problems and the convergence analysis for supporting this is also different from the convex case.



\section{Main Result: The Convergence Analysis}
The most important contribution is the convergence analysis of the proposed  ADMM in Algorithm \ref{admm} for nonconvex problems which is generally challenging. This section gives the details of the convergence results. 
We first introduce the subgradient of any function   \cite{rockafellar2009variational}, which will be used later.

\begin{definition}
	Let $S\subseteq \mathbb{R}^m$ and ${x}_0\in S$. A vector $v$ is normal to $S$ at $x_0$ in the regular sense, denoted as $v\in \hat{N}_S(x_0)$, if
	\begin{equation*}
	\inproduct{v}{x-x_0} \leq o(\norm{x-x_0}), \ x\in S,
	\end{equation*}
	where $o(\norm{y})$ is defined by $\lim\limits_{\norm{y}\rightarrow0} \frac{o(\norm{y})}{\norm{y}}=0$. A vector is normal to $S$ at $x_0$ in the general sense, denoted as $v\in N_S(x_0)$, if there exist sequences $\{x^k\}\subset S$, $\{v^k\}$ such that $x^k\rightarrow x_0$ and $v^k\rightarrow v$ with $v^k\in \hat{N}_S(x^k)$. The cone $N_S(x_0)$ is called the normal cone to $S$ at $x_0$.
\end{definition}

\begin{definition}
	Consider a lower semi-continuous function $h: \mathbb{R}^m\rightarrow (-\infty,+\infty]$ and a point $x_0$ with $h(x_0)$ finite. For a vector $v\in\mathbb{R}^m$, one says that
	\begin{enumerate}[(a)]
		\item $v$ is a regular subgradient of $h$ at $x_0$, denoted as $v\in\hat{\partial} h(x_0)$, if
		\begin{equation*}
		h(x) \geq h(x_0) + \inproduct{v}{x-x_0} + o(\norm{x-x_0});
		\end{equation*}
		\item $v$ is a (general) subgradient of $h$ at $x_0$, denoted as $v\in \partial h(x_0)$, if there exist sequences $\{x^k\}$, $\{v^k\}$ such that $x^k\rightarrow x_0$, $h(x_k)\rightarrow h(x_0)$ and $v^k\in \hat{\partial} h(x^k)$ with $v^k\rightarrow v$.
	\end{enumerate}
\end{definition}

Let $S$ be a closed non-empty subset of $\mathbb{R}^m$ and its  indicator function be
\begin{equation*}
\iota_S(x)  = \begin{cases}
0, \qquad \text{if } x \in S,\\
+\infty,\quad \text{otherwise.}
\end{cases}
\end{equation*}
Then its subgradient is
$\partial \iota_S(x_0) = N_S(x_0), x_0\in S.$
In this work, we denote $\mathcal{O} = \{ U\in\mathbb{R}^{n\times k} | U^\top U = I\}$ and the indicator function  as   $\iota_{\mathcal{O}}(U)$.

To guarantee the convergence, we need some assumptions for problem (\ref{pro}) as follows: 
\begin{itemize} 
	\item[\textbf{ A1.}] $L$ is positive semi-definite.
	\item[\textbf{A2.}]$g: \mathbb{R}^{n\times n}\rightarrow R$ is lower bounded, differential and $\nabla g$ is Lipschitz continuous, i.e., there exists $ l >0$ such that
	\begin{equation*}
	\norm{\nabla g(X) - \nabla g(Y)} \leq l \norm{X-Y}, \ \forall X,Y\in\mathbb{R}^{n\times n}.
	\end{equation*}
	\item[\textbf{A3.}] The stepsize $\muk$ is chosen large enough such that 
	\begin{enumerate}[(1)]
		\item The $P$-subproblem (\ref{updateP}) is strongly convex with modulus $\gammak$.
		\item $\muk\gammak > l^2(\rho+1)$ and $\muk\geq l$.
	\end{enumerate}
\end{itemize}

We have some remarks regarding   the above assumptions. First,  A1 holds for the SSC model since $L$ is the normalized Laplacian matrix; Second, $g$ can be nonconvex. In SSC, one may use convex or nonconvex sparse regularizer $g$. But $g$ should be Lipschitz differential which can be achieved by using the smoothing technique \cite{nesterov2005smooth} if necessary (see an example in the Experiment section); Third, the  $P$-subproblem (\ref{updateP}) is eventually strongly convex when $\muk$ is large enough.

At the following, we will prove several properties of Algorithm \ref{admm} and give the convergence results.

\begin{lem}\label{lema1}
	Under assumptions A1-A3, all the subproblems in Algorithm \ref{admm} are well defined.
\end{lem}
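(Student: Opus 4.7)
The plan is to check each of the four updates in Algorithm \ref{admm} separately and verify that each produces a well-defined iterate. The trivial cases are the $Y$-update (\ref{updateY}) and the stepsize update (\ref{updatemu}), which are explicit assignments and are thus well defined by inspection. So the content of the lemma is that the two optimization subproblems, for $U$ and for $P$, admit (at least one) minimizer.

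For the $U$-subproblem (\ref{updateU}), I would argue by compactness. Setting $M_k = \Pk - (L-\Yk)/\muk$, the objective $\norm{\UU - M_k}^2$ is continuous in $U$, and the feasible set $\mathcal{O} = \{U\in\mathbb{R}^{n\times k} : U^\top U = I\}$ is the Stiefel manifold, which is closed and bounded (indeed $\norm{U}^2 = \Tr(U^\top U) = k$) and therefore compact. The Weierstrass extreme value theorem then guarantees that a minimizer exists. (One can even write down the minimizer in closed form using the top-$k$ eigenvectors of the symmetric part of $M_k$, since on $\mathcal{O}$ the objective reduces to $k + \norm{M_k}^2 - 2\inproduct{\UU}{M_k}$, but existence is all that is needed here.)

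For the $P$-subproblem (\ref{updateP}), I would invoke assumptions A2 and A3 directly. By A2, $g$ has $l$-Lipschitz gradient, so $g(P) + \tfrac{l}{2}\norm{P}^2$ is convex, i.e., $g$ is ``weakly convex'' with modulus $-l$. Adding the quadratic penalty $\tfrac{\muk}{2}\norm{P-\UUkk+\Yk/\muk}^2$, whose ``Hessian'' is $\muk I$, produces an objective whose generalized Hessian is at least $(\muk - l)I$. By A3(2), $\muk \geq l$ and, more strongly, A3(1) asserts that $\muk$ is chosen large enough that the $P$-subproblem is $\gammak$-strongly convex. A strongly convex, lower-bounded (by A2) and coercive objective on $\mathbb{R}^{n\times n}$ admits a unique minimizer, so $\Pkk$ is well defined.

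The only point that needs a moment of care is the $U$-subproblem, since it is nonconvex and one cannot appeal to strong convexity; compactness of $\mathcal{O}$ is the key ingredient. The $P$-subproblem is essentially immediate from A3 combined with A2, and the remaining two updates are trivial. I expect the proof in the paper to take only a few lines, invoking exactly these observations.
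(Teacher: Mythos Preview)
Your proposal is correct and follows essentially the same route as the paper: compactness of the Stiefel manifold for the $U$-subproblem, and well-posedness of the $P$-subproblem from the assumptions on $g$. The only minor difference is that the paper's (two-line) proof appeals to the lower boundedness of $g$ in A2 (which, together with the quadratic term, gives coercivity and hence existence of a minimizer), whereas you invoke the strong convexity in A3(1); your argument yields uniqueness of $\Pkk$ as a bonus, but either assumption suffices for well-definedness.
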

\begin{proof}
	The $P$-subproblem (\ref{updateP}) is well defined since $g$ is lower bounded under assumption A2. Also, it is obvious that the $U$-subproblem (\ref{updateU}) is well defined.
\end{proof}

\begin{lem}\label{lem2}
	Under assumptions A1-A3, we have
	\begin{equation}\label{boundYdiff}
	\norm{\Yk-\Ykk}^2 \leq l^2\norm{\Pk-\Pkk}^2.
	\end{equation}
\end{lem}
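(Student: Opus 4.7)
The plan is to exploit the first-order optimality condition of the $P$-subproblem together with the dual update to re-express $\Yk$ and $\Ykk$ as images of $\nabla g$, and then invoke the Lipschitz continuity of $\nabla g$ from assumption A2.

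First I would write out the optimality condition for the $P$-update. Since $g$ is differentiable by A2 and the quadratic penalty is smooth, setting the gradient of $\Lag(P,\Ukk,\Yk,\muk)$ with respect to $P$ to zero at $P=\Pkk$ yields
\begin{equation*}
\nabla g(\Pkk) + \Yk + \muk(\Pkk - \UUkk) = 0.
\end{equation*}

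Next I would combine this with the dual update rule (\ref{updateY}), namely $\Ykk = \Yk + \muk(\Pkk - \UUkk)$. Substituting $\muk(\Pkk - \UUkk) = \Ykk - \Yk$ into the optimality condition gives the clean identity
\begin{equation*}
\Ykk = -\nabla g(\Pkk).
\end{equation*}
Applying the same argument one step earlier (shifting all indices by one) yields $\Yk = -\nabla g(\Pk)$. Note this requires only that $\Pk$ was itself produced by step (\ref{updateP}) at the previous iteration, which holds for all $k\geq 1$; for $k=0$ this can be arranged by the initialization convention.

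Finally I would subtract these two identities and apply the Lipschitz continuity of $\nabla g$ from assumption A2:
\begin{equation*}
\norm{\Yk - \Ykk} = \norm{\nabla g(\Pkk) - \nabla g(\Pk)} \leq l\,\norm{\Pkk - \Pk}.
\end{equation*}
Squaring both sides gives (\ref{boundYdiff}). There is no real obstacle here—everything follows from routine manipulation of the optimality condition plus A2; the only subtle point to verify carefully is that $\Pkk$ is indeed a stationary point of the (eventually strongly convex, by A3(1)) $P$-subproblem, so that the optimality condition holds with equality rather than as an inclusion.
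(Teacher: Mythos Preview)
Your proposal is correct and follows exactly the same route as the paper: derive the optimality condition $\nabla g(\Pkk)+\Yk+\muk(\Pkk-\UUkk)=0$, combine it with the dual update to obtain $\Ykk=-\nabla g(\Pkk)$, and then apply the Lipschitz continuity of $\nabla g$ from A2. Your added remarks about the $k=0$ initialization and the strong convexity ensuring a genuine stationary point are reasonable sanity checks but not needed beyond what the paper uses.
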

\begin{proof}
	From the $P$-subproblem  (\ref{updateP}), we have the following optimality condition
	\begin{equation}\label{optcondPP}
	\nabla g(\Pkk) + \Yk + \muk(\Pkk-\UUkk) = 0.
	\end{equation}
	By using $\Ykk = \Yk + \muk (\Pkk - \UUkk)$ in (\ref{updateY}), we have 
	\begin{equation}\label{optcondP}
	\nabla g(\Pkk) =-\Ykk.
	\end{equation}
	Then we have
	\begin{align*}
	\norm{\Ykk-\Yk} = \norm{\nabla g(\Pkk) - \nabla g(\Pk)} \leq l \norm{\Pkk-\Pk},
	\end{align*}
	where the last inequality uses assumption A2.
	The proof is completed.
\end{proof}


\begin{lem}\label{lem3}
	Under assumptions A1-A3, the sequences $\{\Pk,\Uk,\Yk\}$ generated by Algorithm \ref{admm} satisfy
	\begin{itemize}
		\item[(a)] $\Lag(\Pk,\Uk,\Yk,\muk)$ is monotonically decreasing, i.e.,
		\begin{align}
		&\Lag(\Pkk,\Ukk,\Ykk,\mukk) - \Lag(\Pk,\Uk,\Yk,\muk)   \notag \\ 
		\leq &  - \left( \frac{\gammak}{2} - \frac{l^2(\rho+1)}{2\muk}  \right) \norm{\Pkk-\Pk}^2. \label{profdecresL}
		\end{align}
		\item[(b)] $\lim\limits_{k\rightarrow +\infty}\Lag(\Pk,\Uk,\Yk,\muk)=\Lag^*$ for some constant $\Lag^*$. 
		\item[(c)] When $k\rightarrow+\infty$, $\Pkk-\Pk\rightarrow0$,  $\Ykk-\Yk\rightarrow0$ and $\Pk-\UUk\rightarrow0$. 
		\item[(d)] The sequences $\{\Pk\}$, $\{\Uk \}$ and $\{\Yk \}$ are bounded.
		\item[(e)] There exists $G=[G_P \  G_U \ G_Y]$, where 
		\begin{align*}
		G_P =& \partial_P \Lag(\Pkk,\Ukk,\Ykk,\muk),\\
		G_U \in& \partial_U \Lag(\Pkk,\Ukk,\Ykk,\muk)+\partial_U \iota_{\mathcal{O}}(\Ukk),\\
		G_Y = &\partial_Y \Lag(\Pkk,\Ukk,\Ykk,\muk),
		\end{align*}
		
		such that
		\begin{align}
		\norm{G}^2 \leq & (8d+1+\frac{1}{\mu_0^2})\norm{\Yk-\Ykk}^2 \notag\\
		&+ 8d\mu_{\max}^2\norm{\Pk-\Pkk}^2.\label{eqboundsubgr}
		\end{align}
	\end{itemize}
\end{lem}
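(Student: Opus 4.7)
The overall strategy is to chain the five claims so that (a) does the hard work and (b)--(e) follow as consequences. For (a) I would telescope the Lagrangian across the four atomic updates inside a single iteration and estimate each piece separately: the $U$-step contributes $\leq 0$ because $\Ukk$ minimizes $\Lag(\Pk,\cdot,\Yk,\muk)$ over $\mathcal{O}$ while $\Uk\in\mathcal{O}$ (the subproblem (\ref{updateU}) is equivalent to this minimization once one observes that $\norm{\UU}^2$ is constant on $\mathcal{O}$); the $P$-step contributes $\leq -\frac{\gammak}{2}\norm{\Pkk-\Pk}^2$ by A3-(1); the $Y$-step equals $\inproduct{\Ykk-\Yk}{\Pkk-\UUkk} = \frac{1}{\muk}\norm{\Ykk-\Yk}^2$ via (\ref{updateY}); and the $\mu$-step equals $\frac{\mukk-\muk}{2}\norm{\Pkk-\UUkk}^2 \leq \frac{\rho-1}{2\muk}\norm{\Ykk-\Yk}^2$ using (\ref{updateY}) and $\mukk\leq\rho\muk$. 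Summing these four pieces and invoking Lemma \ref{lem2} to bound $\norm{\Ykk-\Yk}^2$ by $l^2\norm{\Pkk-\Pk}^2$ reproduces (\ref{profdecresL}), with A3-(2) guaranteeing that $\frac{\gammak}{2}-\frac{l^2(\rho+1)}{2\muk}>0$.

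For (b), using (\ref{optcondP}) and the descent inequality for the $l$-smooth $g$, I would derive $g(\Pk) + \inproduct{\Yk}{\Pk-\UUk} = g(\Pk) - \inproduct{\nabla g(\Pk)}{\Pk-\UUk} \geq g(\UUk) - \frac{l}{2}\norm{\Pk-\UUk}^2$, so that
\begin{equation*}
\Lag(\Pk,\Uk,\Yk,\muk) \geq \inproduct{L}{\UUk} + g(\UUk) + \tfrac{\muk-l}{2}\norm{\Pk-\UUk}^2,
\end{equation*}
which is bounded below under A1 (first term $\geq 0$ via $\Uk^\top\Uk=I$), A2 ($g$ lower bounded), and A3-(2) ($\muk\geq l$); combined with the monotone decrease from (a), this yields convergence to some $\Lag^*$. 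Part (c) then follows by telescoping (a): $\sum_k (\frac{\gammak}{2}-\frac{l^2(\rho+1)}{2\muk})\norm{\Pkk-\Pk}^2<\infty$ with the coefficient bounded below by a positive constant since $\muk\in[\mu_0,\mu_{\max}]$, so $\Pkk-\Pk\to 0$; Lemma \ref{lem2} then gives $\Ykk-\Yk\to 0$, and (\ref{updateY}) with $\muk\geq\mu_0$ forces $\Pk-\UUk\to 0$. For (d), $\{\Uk\}$ is bounded trivially from $\Uk^\top\Uk=I$; the lower bound above combined with $\Lag\leq \Lag(P_0,U_0,Y_0,\mu_0)$ bounds $\norm{\Pk-\UUk}$ uniformly, hence $\{\Pk\}$ is bounded; finally $\Yk=-\nabla g(\Pk)$ together with Lipschitz continuity of $\nabla g$ bounds $\{\Yk\}$.

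For (e), the $P$ and $Y$ blocks fall out directly: (\ref{optcondP}) and (\ref{updateY}) give $G_P = \nabla g(\Pkk) + \Ykk + \muk(\Pkk-\UUkk) = -\Ykk + \Ykk + (\Ykk-\Yk) = \Ykk-\Yk$, and $G_Y = \Pkk-\UUkk = (\Ykk-\Yk)/\muk$, contributing $\norm{\Ykk-\Yk}^2 + \norm{\Ykk-\Yk}^2/\mu_0^2$ to $\norm{G}^2$. For the $U$ block, the optimality of the $U$-subproblem supplies an element $V\in N_\mathcal{O}(\Ukk)$ with $\nabla_U\Lag(\Pk,\Ukk,\Yk,\muk) + V = 0$; choosing this same $V$ yields $G_U = \nabla_U\Lag(\Pkk,\Ukk,\Ykk,\muk) - \nabla_U\Lag(\Pk,\Ukk,\Yk,\muk) = -2(\Ykk-\Yk)\Ukk + 2\muk(\Pk-\Pkk)\Ukk$; then $\norm{\Ukk}_{op}=1$ together with $\norm{a+b}^2\leq 2\norm{a}^2+2\norm{b}^2$ and $\muk\leq\mu_{\max}$ reproduce (\ref{eqboundsubgr}). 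The principal obstacle throughout is (a): the increasing-stepsize regime creates a positive ``cost'' in the $\mu$-step that must be dominated by the strong-convexity gain from the $P$-step, and A3-(2) is calibrated precisely to make this domination hold---this is the technical novelty over fixed-stepsize nonconvex ADMM analyses.
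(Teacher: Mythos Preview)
Your argument is correct and tracks the paper's proof almost exactly: the four-piece telescoping in (a), the descent-lemma lower bound in (b), the summability argument in (c), and the subgradient bookkeeping in (e) all match the paper's derivations line by line (your bound in (e) is even slightly tighter, with a factor $8$ where the paper writes $8d$). The only substantive deviation is in (d): the paper bounds $\{\Pk\}$ by invoking part (c) directly, namely $\Pk-\UUk\to 0$ implies $\norm{\Pk-\UUk}$ is bounded, hence $\norm{\Pk}\leq\norm{\Pk-\UUk}+\norm{\UUk}$ is bounded. You instead sandwich the Lagrangian to get $\tfrac{\muk-l}{2}\norm{\Pk-\UUk}^2$ bounded above; this needs $\inf_k(\muk-l)>0$, which A3-(2) alone does not guarantee (it only gives $\muk\geq l$). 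In the nondegenerate regime $\mu_0>l$ or $\mu_{\max}>l$ your route works after handling $k=0$ separately, but the paper's use of (c) is cleaner and avoids this edge case entirely.
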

The proof of Lemma \ref{lem3}  can be found  in the supplementary material. The property (\ref{boundYdiff}) is important for proving (\ref{profdecresL}), which guarantees that $\Lag(\Pk,\Uk,\Yk,\muk)$ is monotonically decreasing due to the choice of $\muk$ in assumption A3. This combined with the lower bounded property guarantees that   $\Lag(\Pk,\Uk,\Yk,\muk)$ converges to some $\Lag^*>0$. For convex problems, there have several different quantities to characterize the convergence of ADMM, see  \cite{lu2017unified,he20121,liu2013linearized}. However, they are not applicable to ADMM for nonconvex problems. Here, the convergence is characterizing based on a different way by using the decreasing sequence   $\Lag(\Pk,\Uk,\Yk,\muk)$. Note that the decreasing property of  $\Lag(\Pk,\Uk,\Yk,\muk)$ does not necessary hold for ADMM for convex optimization. This difference implies that the nonconvex problems which can be solved by ADMM  are relatively limited and the convergence guarantee of ADMM for nonconvex problems is much more challenging. Based on (\ref{profdecresL}), many other properties are proved. For example, Lemma \ref{lem3} (c) gives some necessary results when the algorithm converges and (d) and (e) are important for proving the convergence to stationary point shown  below.

\begin{theorem}\label{thm1} Assume that the assumptions A1-A3 are satisfied.
	Let $(P^*,U^*,Y^*)$ denotes any limit point of the sequence $\{\Pk,\Uk,\Yk\}$ generated by Algorithm \ref{admm}. Then the limit point is a stationary point of problem (\ref{pro}), i.e.,
	\begin{align}
	& 0\in \partial_U\Lag(P^*,U^*,Y^*,\mu^*)  +\partial_U \iota_{\mathcal{O}}(U^*),\label{eqnthm1stapoint1}\\
	& 0  = \partial_P\Lag(P^*,U^*,Y^*,\mu^*),\label{eqnthm1stapoint2}\\
	& 0  = \partial_Y\Lag(P^*,U^*,Y^*,\mu^*) = P^* - U^*{U^*}^\top.\label{eqnthm1stapoint3}
	\end{align}
\end{theorem}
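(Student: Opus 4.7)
The plan is to combine three ingredients already established in Lemma~\ref{lem3}: boundedness of the iterates (d), asymptotic regularity $P_{k+1}-P_k\to 0$, $Y_{k+1}-Y_k\to 0$, $P_k-U_kU_k^\top\to 0$ (c), and the subgradient bound (\ref{eqboundsubgr}) at $(P_{k+1},U_{k+1},Y_{k+1})$ (e). I will then pass to a limit along a convergent subsequence and invoke the graph-closure characterization of the general subgradient, i.e.\ item (b) of Definition~2.

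First, since $\{(P_k,U_k,Y_k)\}$ is bounded by Lemma~\ref{lem3}(d) and $\{\mu_k\}$ is nondecreasing and upper bounded by $\mu_{\max}$, Bolzano--Weierstrass supplies an index sequence $\{k_j\}$ with $(P_{k_j},U_{k_j},Y_{k_j})\to(P^*,U^*,Y^*)$ and $\mu_{k_j}\to\mu^*$. Each $U_{k_j}$ satisfies $U_{k_j}^\top U_{k_j}=I$ because the $U$-update (\ref{updateU}) enforces orthogonality, so the closedness of $\mathcal{O}$ gives $U^*\in\mathcal{O}$. The dual-stationarity identity (\ref{eqnthm1stapoint3}) follows immediately: $\partial_Y\mathcal{L}(P,U,Y,\mu)=P-UU^\top$, and Lemma~\ref{lem3}(c) gives $P_k-U_kU_k^\top\to 0$, so $P^*-U^*(U^*)^\top=0$ by continuity.

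For the remaining two conditions I will apply Lemma~\ref{lem3}(e) with $k$ replaced by $k_j-1$. This choice is crucial: it makes the subgradients in (e) live at $(P_{k_j},U_{k_j},Y_{k_j})$, which is exactly the sequence already known to converge. Concretely, there exist triples
\begin{align*}
G_P^{(j)} &= \partial_P\mathcal{L}(P_{k_j},U_{k_j},Y_{k_j},\mu_{k_j-1}),\\
G_U^{(j)} &\in \partial_U\mathcal{L}(P_{k_j},U_{k_j},Y_{k_j},\mu_{k_j-1})+\partial_U\iota_{\mathcal{O}}(U_{k_j}),\\
G_Y^{(j)} &= \partial_Y\mathcal{L}(P_{k_j},U_{k_j},Y_{k_j},\mu_{k_j-1}),
\end{align*}
with $\|G^{(j)}\|^2$ bounded by an expression in $\|Y_{k_j-1}-Y_{k_j}\|^2$ and $\|P_{k_j-1}-P_{k_j}\|^2$. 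By Lemma~\ref{lem3}(c) the right-hand side tends to zero, so $G^{(j)}\to 0$.

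Finally, I pass to the limit inside the subgradient. For the smooth parts of $\mathcal{L}$ this only needs continuity of $\nabla g$ and of the polynomial terms, which are guaranteed by A1--A2; for the indicator piece I use that $\iota_{\mathcal{O}}(U_{k_j})=0=\iota_{\mathcal{O}}(U^*)$. Invoking the graph-closure property of the general subgradient (Definition~2(b)) separately in the $P$ and $U$ blocks then yields $0=\partial_P\mathcal{L}(P^*,U^*,Y^*,\mu^*)$ and $0\in\partial_U\mathcal{L}(P^*,U^*,Y^*,\mu^*)+\partial_U\iota_{\mathcal{O}}(U^*)$, which are exactly (\ref{eqnthm1stapoint2}) and (\ref{eqnthm1stapoint1}). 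The main technical obstacle is this last step: justifying that a vanishing sequence of general subgradients at iterates converging to $(P^*,U^*,Y^*)$ forces $0$ into the subgradient at the limit. The subtle part is the indicator term $\iota_{\mathcal{O}}$, which is not continuous globally, so one must use that the iterates stay inside $\mathcal{O}$ and that $\mathcal{O}$ is closed to match the hypotheses of Definition~2(b); the chosen re-indexing $k\mapsto k_j-1$ is exactly what aligns the subgradient evaluation points with the already-convergent subsequence and avoids the rotational ambiguity that would arise from trying to pass $U_{k_j+1}\to U^*$ directly.
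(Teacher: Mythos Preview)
Your proposal is correct and follows essentially the same route as the paper: extract a convergent subsequence via Lemma~\ref{lem3}(d), apply Lemma~\ref{lem3}(e) with the index shift $k\mapsto k_j-1$ so the subgradient bound lives at the already-convergent iterates, use Lemma~\ref{lem3}(c) to drive the bound to zero, and then invoke the graph closure of the general subgradient (Definition~2(b)) to conclude. The paper does exactly this in a more compressed form; your version is more explicit about the re-indexing, the closedness of $\mathcal{O}$ needed for the $\iota_{\mathcal{O}}$ term, and the passage from $\mu_{k_j-1}$ to $\mu^*$, all of which are glossed over in the paper's one-paragraph argument.
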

\begin{proof} From the boundedness of $\{\Pk,\Uk,\Yk,\muk\}$ in Lemma \ref{lem3}, there exists a convergent subsequence and a  limit point, denoted by  $(P_{k_j}, U_{k_j},Y_{k_j},\mu_{k_j}) \rightarrow (P^*,U^*,Y^*,\mu^*)$ as $j\rightarrow +\infty$.  Then, by using $\Pkk-\Pk\rightarrow0$,  $\Ykk-\Yk\rightarrow0$ and (\ref{eqboundsubgr}) in Lemma \ref{lem3}, for $k\geq1$, there exists $G_{k}\in\partial \Lag(\Pk,\Uk,\Yk,\mu_{k-1})$ such that $\norm{G_k}\rightarrow 0$. In particular, $\norm{G_{k_j}}\rightarrow 0$ as $j\rightarrow+\infty$. By the definition of general subgradient, we have $0\in \partial\Lag(P^*,U^*,Y^*,\mu^*)$. This implies that (\ref{eqnthm1stapoint1})-(\ref{eqnthm1stapoint3}) hold. Thus, any limit point is a stationary point.
\end{proof}

\begin{theorem}\label{thm2}
	For every $K\geq 1$, the sequences $\{\Pk,\Uk,\Yk\}$ generated by Algorithm \ref{admm} satisfies
	\begin{align*}
	\min_{0\leq k\leq K} \norm{\Pkk-\Pk}^2 &\leq \frac{\Lag(P_0,U_0,Y_0,\mu_0) - \Lag^*}{(K+1)c_K},\\
	\min_{0\leq k\leq K} \norm{\Ykk-\Yk}^2 &\leq \frac{l^2(\Lag(P_0,U_0,Y_0,\mu_0) - \Lag^*)}{(K+1)c_K}, \\
	\min_{0\leq k\leq K} \norm{\Pkk-\UUkk}^2 &\leq \frac{l^2(\Lag(P_0,U_0,Y_0,\mu_0) - \Lag^*)}{(K+1)c_K\mu_0^2},
	\end{align*}
	where $c_K = \min_{0\leq k\leq K} \left( \frac{\gammak}{2} - \frac{l^2(\rho+1)}{2\muk} \right)>0$.
\end{theorem}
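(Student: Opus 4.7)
The plan is to chain together the sufficient-decrease inequality from Lemma~\ref{lem3}(a), the lower-boundedness of the Lagrangian from Lemma~\ref{lem3}(b), the primal--dual estimate from Lemma~\ref{lem2}, and the dual update identity~(\ref{updateY}). All three bounds in the theorem should follow from the first by two short manipulations, so the real work is extracting the first bound by a telescoping argument.

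First I would invoke the descent inequality~(\ref{profdecresL}) at each index $k=0,1,\dots,K$ and sum:
\begin{equation*}
\sum_{k=0}^{K} \left( \frac{\gammak}{2} - \frac{l^2(\rho+1)}{2\muk} \right) \norm{\Pkk-\Pk}^2 \leq \Lag(P_0,U_0,Y_0,\mu_0) - \Lag(P_{K+1},U_{K+1},Y_{K+1},\mu_{K+1}).
\end{equation*}
By assumption A3(2) each coefficient on the left is strictly positive, so lower bounding every coefficient by $c_K = \min_{0\leq k\leq K}\bigl(\gammak/2 - l^2(\rho+1)/(2\muk)\bigr) > 0$ and upper bounding the right-hand side by $\Lag(P_0,U_0,Y_0,\mu_0) - \Lag^*$ using Lemma~\ref{lem3}(b) yields
\begin{equation*}
c_K \sum_{k=0}^{K} \norm{\Pkk-\Pk}^2 \leq \Lag(P_0,U_0,Y_0,\mu_0) - \Lag^*.
\end{equation*}
Since the sum has $K+1$ nonnegative terms, the minimum is no larger than the average, which gives the first claimed bound.

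For the second bound, I would apply Lemma~\ref{lem2} term by term: $\norm{\Yk-\Ykk}^2 \leq l^2 \norm{\Pk-\Pkk}^2$, so taking the minimum over $0\le k\le K$ on both sides and substituting the first bound gives the stated estimate. For the third bound, the dual update~(\ref{updateY}) rearranges to $\Pkk-\UUkk = (\Ykk-\Yk)/\muk$, hence $\norm{\Pkk-\UUkk}^2 \leq \norm{\Ykk-\Yk}^2/\mu_0^2$ because the stepsize rule~(\ref{updatemu}) with $\rho>1$ makes $\{\muk\}$ non-decreasing and therefore $\muk \geq \mu_0$. Combining this with the second bound produces the third.

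I do not anticipate any serious obstacle: the descent inequality already isolates $\norm{\Pkk-\Pk}^2$ as the right quantity to telescope, and the other two estimates are immediate consequences of Lemma~\ref{lem2} and the dual update. The only points that warrant care are verifying $c_K > 0$ (which is exactly the content of assumption A3(2)) and that $\muk \geq \mu_0$ for all $k$ (which follows from $\rho>1$ in~(\ref{updatemu})), both of which are structural rather than computational.
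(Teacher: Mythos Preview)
Your proposal is correct and follows essentially the same route as the paper: telescope the descent inequality~(\ref{profdecresL}), bound each coefficient below by $c_K$ and the Lagrangian gap above by $\Lag(P_0,U_0,Y_0,\mu_0)-\Lag^*$, then use the min~$\leq$~average step; the remaining two bounds are obtained exactly as you describe via Lemma~\ref{lem2} and the dual update~(\ref{updateY}) together with $\muk\geq\mu_0$. The paper's own proof is terser on the last two steps (it just says ``by further using~(\ref{boundYdiff}) and~(\ref{updateY})''), but the argument is identical.
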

\begin{proof}
	From  (\ref{profdecresL}) and the  definition of $c_K>0$, we have
	\begin{align*}
	& c_K \norm{\Pkk-\Pk}^2 \\
	\leq  &\Lag(\Pk,\Uk,\Yk,\muk) -  \Lag(\Pkk,\Ukk,\Ykk,\mukk).   
	\end{align*}
	Summing the above equality over $k=0,\cdots,K$, we obtain
	\begin{align*}
	&\sum_{k=0}^{K} c_K \norm{\Pkk-\Pk}^2 \\
	\leq &  \Lag(P_0,U_0,Y_0,\mu_0) -  \Lag(\Pkk,\Ukk,\Ykk,\mukk)  \\
	\leq & \Lag(P_0,U_0,Y_0,\mu_0) -  \Lag^*.
	\end{align*}
	Thus, we have
	\begin{align*}
	&\min_{0\leq k\leq K} \norm{\Pkk-\Pk}^2 \\
	\leq & \frac{1}{K+1} \sum_{k=0}^{K}  \norm{\Pkk-\Pk}^2 
	\leq  \frac{\Lag(P_0,U_0,Y_0,\mu_0) - \Lag^*}{(K+1)c_K}.
	\end{align*}
	The proof is completed by further using (\ref{boundYdiff}) and (\ref{updateY}).
\end{proof}
Theorem \ref{thm2} gives the $O(1/K)$ convergence rate of our proposed ADMM based on the smallest difference of iterates of $\Pk$, $\Yk$ and the residual.  
To the best of our knowledge, this is the first convergence rate of ADMM  for nonconvex problems. In theory, such a result is relatively weaker than the convex case since the used measure is $\min_{0\leq k\leq K} \norm{\Pkk-\Pk}^2$ but not $\norm{P_{K+1}-P_K}^2$. But in practice, we observe that the sequence $\norm{P_{k+1}-P_k}^2$ seems to decrease (see Figure \ref{fig_convergence} (b) in the Experiment section), though this is currently not clear in theory. This observation in practice implies that the above convergence rate makes sense.


It is worth mentioning that  the convergence guarantee of ADMM for convex problems has been well established \cite{lu2017unified,he20121,liu2013linearized}. However, for nonconvex cases, the convergence analysis of ADMM for different nonconvex problems is   quite different. There are some recent works \cite{hong2016convergence,wang2015global} which apply ADMM to solve nonconvex problems and provide some analysis. However, these works are not able to solve our problem (\ref{pro}) since the constraints in their considered problems should be relatively simple while our problem has a special nonconvex constraint $P=\UU$. The work \cite{hong2016convergence} is not applicable to our problem since it requires all the subproblems to be strongly convex while our $U$-subproblem (\ref{updateU}) is nonconvex. When considering to apply the method in \cite{wang2015global} to solve (\ref{ncvxssc}), one has to exactly solve the problem of the following type
\begin{equation*}
\min_{P} \ g(PP^\top) + \frac{1}{2}\norm{P-B}^2,
\end{equation*}
in each iteration.  This is generally very chellenging even when $g$ is convex. Also, we would like to emphasize that our ADMM allows the stepsize $\muk$ to be increasing (but upper bounded), while previous nonconvex ADMM algorithms simply fix it. Though our considered problem is specific, the analysis for the varying stepsize $\muk$ is applicable to other nonconvex problems in \cite{hong2016convergence}. In practice, the convergence speed of ADMM is sensitive to the choice of $\mu$, but it is generally difficult to find a proper constant stepsize  for fast convergence.  Our choice of $\muk$ has been shown to be effective in improving the convergence speed and widely used in convex optimization \cite{lu2017unified,lin2010augmented}. In practice, we find  that such a technique is also very useful for fast implementation  of ADMM  for nonconvex problems. We are also the first one to give the convergence rate (in the sense of Theorem \ref{thm2}) of ADMM  for nonconvex problems.

\section{Experiments}

In this section, we conduct some experiments to analyze the convergence of the proposed ADMM for nonconvex SSC and show its effectiveness for data clustering. We consider to solve the following nonconvex SSC model
\begin{equation}\label{prosmooth}
\begin{split}
&\min_{P\in\mathbb{R}^{n\times n},U\in\mathbb{R}^{n\times k}} \  \inproduct{L}{\UU} + g_\sigma(P), \\
& \st \ P = \UU, \ U^\top U=I,
\end{split}
\end{equation}
where $g_\sigma$ is the smoothed $\ell_1$-norm $\beta \norm{P}_1$ with a smoothness parameter $\sigma>0$ defined as follows 
\begin{equation}
g_\sigma(P) = \max_Z \inproduct{P}{Z} - \frac{\sigma}{2}\norm{Z}^2, \ \st \ \norm{Z}_\infty \leq \beta,
\end{equation}
where $\norm{Z}_\infty=\max_{ij}|z_{ij}|$. According to Theorem 1 in \cite{nesterov2005smooth}, the gradient of $  g_\sigma(P)$ is given by $\nabla g_\sigma(P) = \min\{\beta,\max\{ P/\sigma,-\beta\} \}$ and is Lipschitz continuous with Lipschitz constant $l=1/\sigma$. Note that $g_\sigma$ is convex. So we set 
$\mu_0=1.01(l\sqrt{\rho+1})$, which guarantees the assumption A3 holds. In Algorithm \ref{admm}, we   set $\rho{=1.05}$, $\mu_{\max}=1e10$, and $U_0$ is initialized as the $k$ eigenvectors associated to the $k$ smallest eigenvalues of $L$, where $k$ is the number of the clusters and $L$ is the normalized Laplacian matrix constructed based on the given affinity matrix $W$. Then we set $P_0=U_0U_0^\top$ and $Y_0=0$. We use the following stopping criteria for Algorithm \ref{admm}
\begin{equation}\label{stopc}
\max\{\norm{\Pkk-\Pk}_\infty,\norm{\Pkk-\UUkk}_\infty\} \leq  10^{-6},
\end{equation}
which is implied by our convergence analysis.
For all the experiments, we use $\sigma = 0.01$ (in practice, we find that the clustering performance is not sensitive when $\sigma\leq 0.01$).

We conduct two experiments based on two different ways of affinity matrix construction. The first experiment considers the subspace clustering problem in \cite{elhamifar2013sparse}. A \textit{sparse} affinity matrix $W$ is computed by using the sparse subspace clustering method ($\ell_1$-graph) \cite{elhamifar2013sparse}, and then it is used as the input for SC, convex SSC \cite{lu2016convex}  and our nonconvex SSC model solved by ADMM. The second experiment instead uses the Gaussian kernel to construct the affinity matrix which is generally \textit{dense}. We will show the effectiveness of nonconvex SSC in both settings.

\begin{table}[]
	\small
	\centering
	\caption{  Clustering errors ($\%$) on the Extended Yale B database based on the \textit{sparse} affinity matrix $W$ constructed by the $\ell_1$-graph.} 
		\vspace{-0.2em}
	\label{tab_resyaleb}	
	\begin{tabular}{c|c|c|c|c}
		\hline
		$\#$ of            & \multirow{2}{*}{SC}       & \multirow{2}{*}{SSC}  & SSC- & SSC-  \\
		subjects &     &      & PG & ADMM                       \\ \hline	
		2  & 1.56$\pm$2.95 & 1.80$\pm$2.89  & 1.37$\pm$3.15        & \textbf{1.21$\pm$2.10} \\ \hline
		3  & 3.26$\pm$7.69 & 3.36$\pm$7.76  & 3.12$\pm$6.23        & \textbf{2.40$\pm$4.92} \\ \hline
		5  & 6.33$\pm$5.36 & 6.61$\pm$5.93  & 5.65$\pm$4.33        & \textbf{3.86$\pm$2.82} \\ \hline
		8  & 8.93$\pm$6.11 & 4.98$\pm$4.00  & 4.95$\pm$3.36        & \textbf{4.67$\pm$3.40} \\ \hline
		10 & 9.94$\pm$4.57 & \textbf{4.60$\pm$2.59} & 5.91$\pm$4.52 & 5.84$\pm$3.43           \\ \hline
	\end{tabular}
		\vspace{-1em}
\end{table}

\subsection{Affinity matrix construction by the $\ell_1$-graph }
For the first experiment, we consider the case that the affinity matrix is constructed by the $\ell_1$-graph \cite{elhamifar2013sparse}.  We test on the  Extended Yale B database \cite{YaleBdatabase}  to analyze the effectiveness of our nonconvex SSC model in (\ref{prosmooth}). 	The Extended Yale B dataset consists of 2,414 face images of 38 subjects. Each subject has 64 faces. We resize the images to $32\times 32$   and   vectorized them as 1,024-dimensional data points. We construct 5 subsets which consist of all the images of the randomly selected 2, 3, 5, 8 and 10 subjects of this dataset. For each trial, we follow the settings in \cite{elhamifar2013sparse} to construct the affinity matrix $W$  by solving a sparse representation model  (or $\ell_1$-graph), which is the most widely used method. Based on the learned affinity matrix by $\ell_1$-graph, the following three methods are compared to find the final clustering results:
\begin{itemize}
	\item SC: traditional spectral clustering method \cite{ng2002spectral}.
	\item SSC: convex SSC model \cite{lu2016convex}.
	\item SSC-ADMM: our nonconvex SSC model solved by the proposed ADMM in Algorithm \ref{admm}.
	\item SSC-PG: our nonconvex SSC model (\ref{ncvxssc}) can also be solved by the Proximal Gradient (PG) \cite{beck2009fast} method (a special case of Algorithm 1 in~\cite{mairal2013optimization}). In each iteration, PG updates $\Ukk$ by  
	\begin{align*}
	\Ukk&=\argmin_{U\in\mathbb{R}^{n\times k}}  g(\UUk)+ \inproduct{\nabla g(\UUk)}{UU^\top-\UUk} \\
	&+\frac{l}{2}\norm{UU^\top-\UUk}^2+\inproduct{L}{UU^\top} \\
	&  \st \ \ \ U^\top U = I.
	\end{align*}
	It is easy to see that the above problem has a closed form solution. We use the stopping criteria $\norm{\Uk-\Ukk}_\infty\leq 10^{-6}$. We name the above method as SSC-PG. 
\end{itemize}
Note that all the above four methods use the same affinity matrix as the input and  their main difference is the different ways of learning of low-dimensional embedding $U$. In the nonconvex model (\ref{prosmooth}), we set $\beta=0.01$. The experiments are repeated 20 times and the mean and standard deviation of the clustering error rates  (see the definition in \cite{elhamifar2013sparse}) are reported.

\begin{figure}[!t]
	\centering
	\begin{subfigure}[b]{0.23\textwidth}
		\centering
		\includegraphics[width=\textwidth]{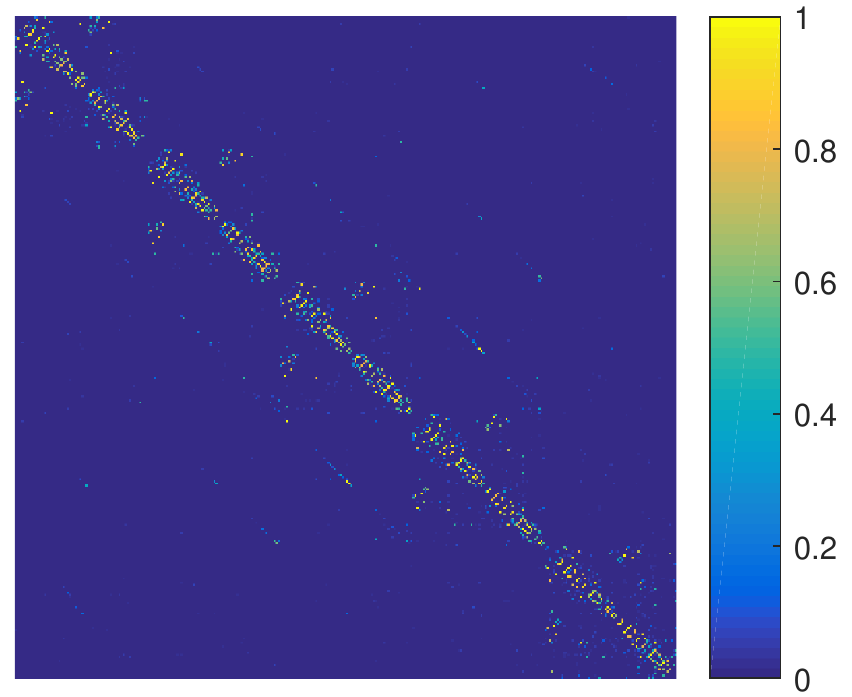}
		\caption{$W$ learned by the $\ell_1$-graph}
	\end{subfigure}
	\begin{subfigure}[b]{0.23\textwidth}
		\centering
		\includegraphics[width=\textwidth]{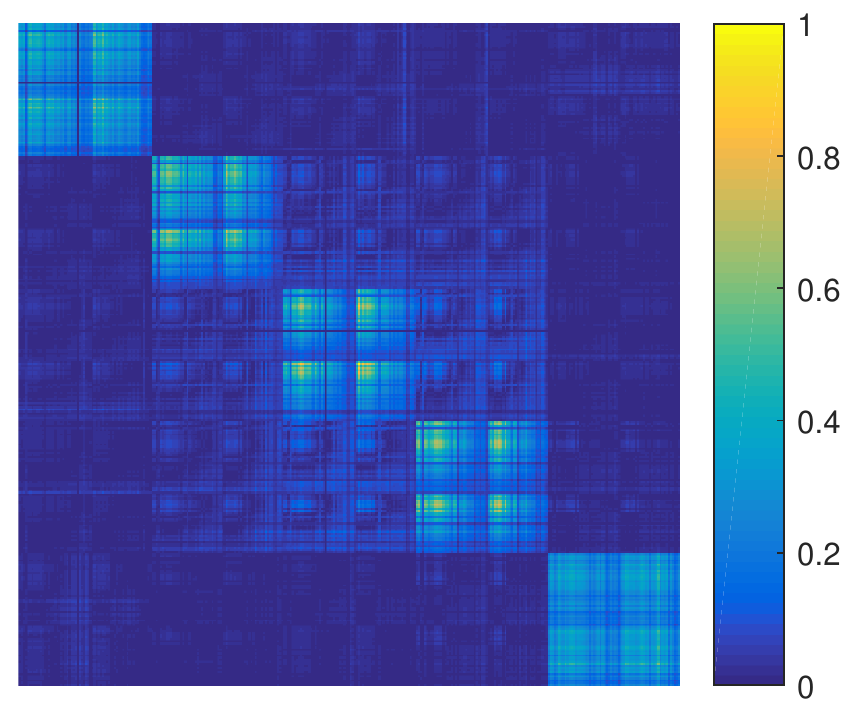}
		\caption{$\UU$ by SC}
	\end{subfigure}
	\begin{subfigure}[b]{0.23\textwidth}
		\centering
		\includegraphics[width=\textwidth]{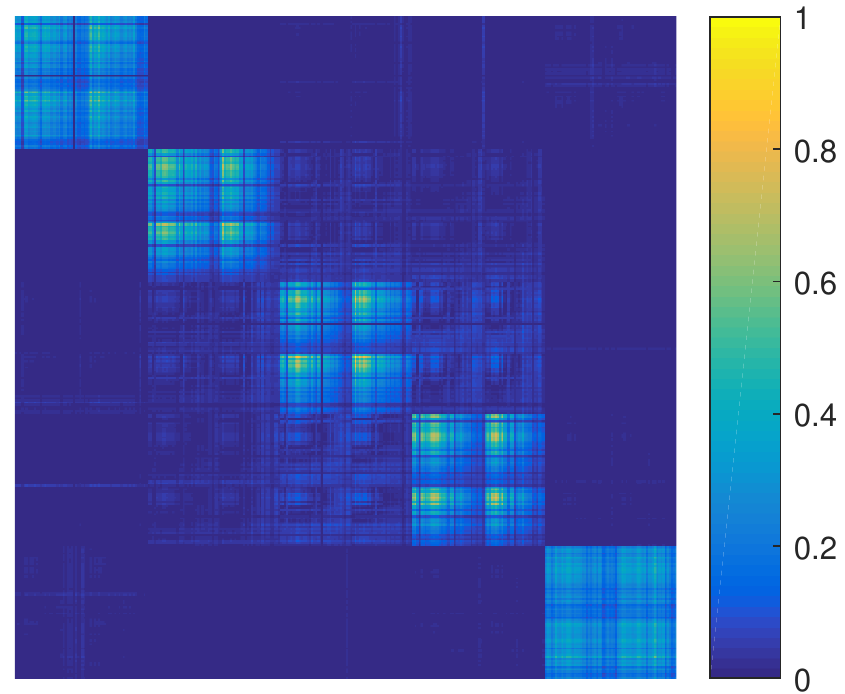}
		\caption{$\UU$ by convex SSC}
	\end{subfigure}
	\begin{subfigure}[b]{0.23\textwidth}
		\centering
		\includegraphics[width=\textwidth]{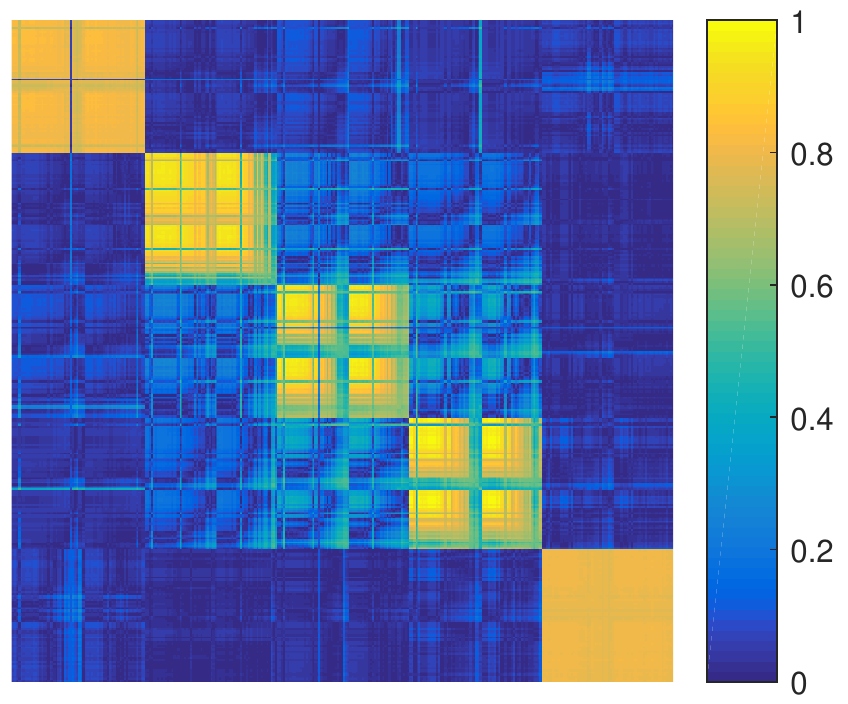}
		\caption{$\UU$ by SSC-ADMM}
	\end{subfigure}
	\caption{ An example with  5 subjects from the Extended Yale B database. (a) Plot of the affinity matrix $W$ learned by the $\ell_1$-graph \cite{elhamifar2013sparse}; (b) Plot of $\UU$ with $U$ learned by SC in (\ref{sc}); (c) Plot of $\UU$ with $U$ learned by convex SSC in (\ref{computUssc}); (d) Plot of $\UU$ with $U$ learned by SSC-ADMM in (\ref{pro}). Each matrix is normalized to [0,1] for better visualization. }\label{fig_matrixyaleb}
		\vspace{-1em}
\end{figure}

\begin{figure}[!t]
	\centering
	\begin{subfigure}[b]{0.3\textwidth}
		\centering
		\includegraphics[width=\textwidth]{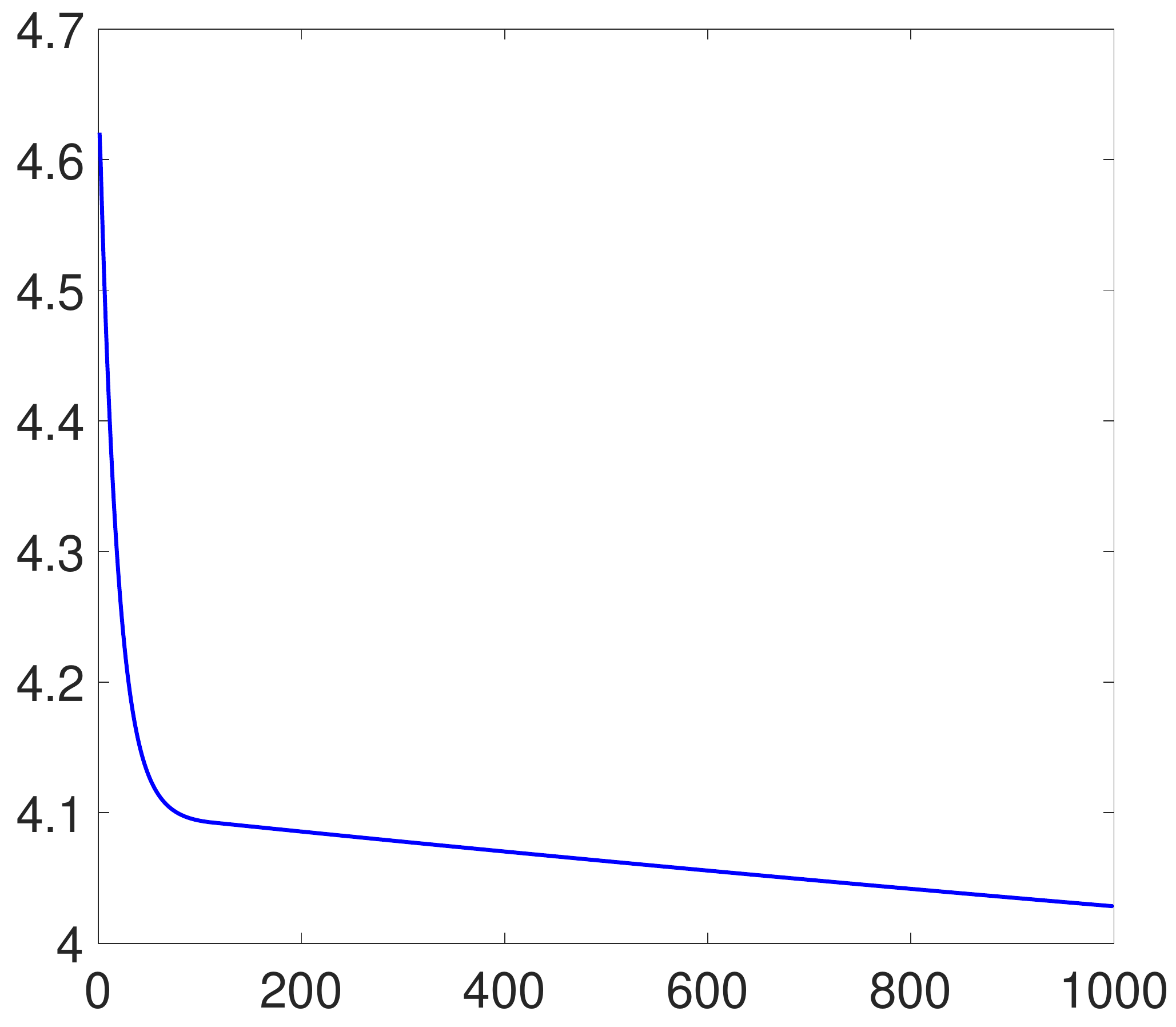}
		\caption{$\Lag(\Pk,\Uk,\Yk,\muk)$ } 
	\end{subfigure}
	\begin{subfigure}[b]{0.3\textwidth}
		\centering
		\includegraphics[width=\textwidth]{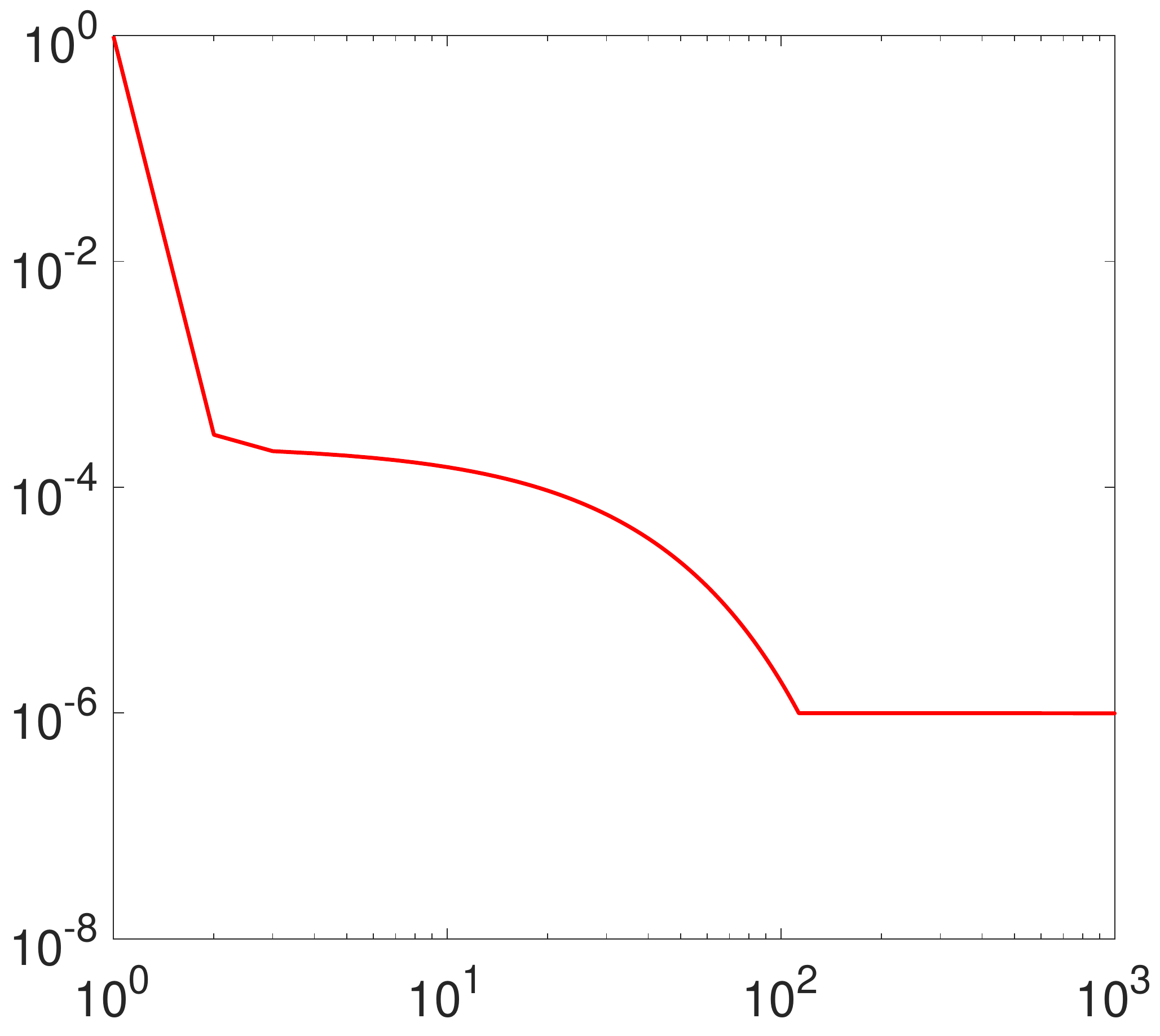}
		\caption{stopping criteria}
	\end{subfigure}
	\caption{   Plots of (a) convergence of   $\Lag(\Pk,\Uk,\Yk,\muk)$ v.s. $k$ and (b) convergence of the stopping criteria in (\ref{stopc}) v.s. $k$. The used data is a 5 subjects subset of   Extended Yale B database. }\label{fig_convergence}
	\vspace{-1em}
\end{figure}

The clustering results  are shown in Table \ref{tab_resyaleb}. 
It can be seen that  our nonconvex SSC models  outperform convex SSC in most cases. The main reason is that  nonconvex SSC is able to directly encourage $\UU$ to be sparse while SSC achieves this in a two-stage way (required solving (\ref{cssc}) and (\ref{computUssc})). Considering a clustering example with $k=5$ subjects  from the Yale B dataset,  Figure \ref{fig_matrixyaleb} plots the learned affinity matrix $W$ by $\ell_1$-graph, and $\UU$ learned by SC, SSC and SSC-ADMM, respectively. Note that $\UU$ is important for data clustering since $\hatU\hatU^\top$ ($\hatU$ is the row normalization of $U$) implies  the true membership of the data clusters in the ideal case (see the discussions in the Introduction section). 
It can be seen that  $\UU$ by SSC-ADMM looks more discriminative since the within-cluster connections are much   stronger than the between-cluster connections.  Also, for the convergence of the proposed ADMM, we plot the augmented Lagrangian function   $\Lag(\Pk,\Uk,\Yk,\muk)$ and the stopping criteria in (\ref{stopc}).  It can be seen that $\Lag$ is monotonically decreasing and the stopping criteria converges towards 0. The convergence behavior is consistent with our theoretical analysis.

\subsection{Affinity matrix construction by the Gaussian kernel }

For the second experiment,  we consider the case that the affinity matrix is constructed by the Gaussian kernel. We test on 10 datasets which are of different sizes and are widely used in pattern analysis. They include 5 datasets (Wine, USPS, Glass, Letter, Vehicle) from the UCI website \cite{ucidataset}\footnote{\url{https://www.csie.ntu.edu.tw/~cjlin/libsvmtools/datasets/}.}, 
UMIST\footnote{\url{https://www.sheffield.ac.uk/eee/research/iel/research/face}}, PIE \cite{sim2002cmu}, COIL20\footnote{\url{http://www1.cs.columbia.edu/CAVE/software/softlib/coil-20.php}}, CSTR\footnote{\url{http://www.cs.rochester.edu/trs/}} and AR~\cite{martinez1998ar}. For some datasets, e.g., USPS and Letter, we use a subset instead due to the relatively large size. For some image datasets, e.g., UMIST, PIE, and COIL20, the images are resized to $32\times 32$ and then vectorized as features.  The statistics of these datasets are summarized in Table \ref{tab_dataset}. The key difference of this experiment from the first one is that we use the Gaussian kernel instead of the sparse subspace clustering to construct the affinity matrix $W$.

\begin{table}[t]	
	\small
	\centering
	\caption{   Statistics of 10 datasets. }
		\vspace{-1.em}
	\begin{tabular}{c|c|c|c}\hline
		dataset     &  $\#$ samples & $\#$ features & $\#$ clusters \\\hline\hline
		Wine & 178 & 13 & 3 \\\hline
		USPS & 1,000 & 256 & 10 \\\hline
		Glass & 214 & 9 & 6 \\ \hline
		Letter & 1,300 & 16 & 26 \\ \hline
		Vehicle & 846 & 18 & 4 \\\hline
		UMIST & 564 & 1,024 & 20 \\\hline
		PIE & 1,428      & 1,024     & 68        \\\hline
		COIL20   & 1,440     & 1,024     & 20        \\\hline	
		CSTR         & 476    & 1,000     & 4        \\\hline
		AR       & 840    & 768     & 120        \\\hline
	\end{tabular}	\label{tab_dataset}
\end{table}

\begin{table}[]
	\small
	\centering
	\caption{  Clustering accuracy on 10 datasets. In this table, SC, SSC, SSC-PG and SSC-ADMM use the \textit{dense} affinity matrix $W$ constructed by the Gaussian kernel.}
	\vspace{-1.em}
	\label{tab_otherdatsets}
	\begin{tabular}{c|c|c|c|c|c|c}
		\cline{1-7}
		\multirow{2}{*}{}            & \multirow{2}{*}{k-means}  & \multirow{2}{*}{NMF}      & \multirow{2}{*}{SC}       & \multirow{2}{*}{SSC}  & SSC- & SSC-                     \\
		&                           &                           &                           &                      & PG & ADMM                       \\ \hline
		Wine                         & 94.4                      & 96.1                      & 94.9                     & \multicolumn{1}{c|}{96.1} & \multicolumn{1}{c|}{96.6} & \multicolumn{1}{c}{\textbf{97.2}} \\ \hline
		USPS                         & 67.3                      & 69.2                      & 71.2                      & \multicolumn{1}{c|}{73.4} & \multicolumn{1}{c|}{\textbf{76.8}} & \multicolumn{1}{c}{\textbf{76.8}} \\ \hline
		Glass                        & 40.7                      & 39.3                      & 41.1                      & \multicolumn{1}{c|}{43.0} & \multicolumn{1}{c|}{44.9} &\multicolumn{1}{c}{\textbf{45.3}} \\ \hline
		Letter                       & 27.1                      & 30.4                      & 31.8                      & \multicolumn{1}{c|}{35.3} & \multicolumn{1}{c|}{\textbf{36.4}}& \multicolumn{1}{c}{{36.0}} \\ \hline
		Vehicle                      & 62.1                      & 61.3                      & 67.0                      & \multicolumn{1}{c|}{70.0} & \multicolumn{1}{c|}{{73.0}} & \multicolumn{1}{c}{\textbf{73.4}} \\ \hline
		\multicolumn{1}{c|}{UMIST}  & \multicolumn{1}{c|}{52.1} & \multicolumn{1}{c|}{63.8} & \multicolumn{1}{c|}{63.3} & \multicolumn{1}{c|}{64.2} & \multicolumn{1}{c|}{{65.1}} & \multicolumn{1}{c}{\textbf{66.1}} \\ \hline
		\multicolumn{1}{c|}{PIE}    & \multicolumn{1}{c|}{35.4} & \multicolumn{1}{c|}{37.9} & \multicolumn{1}{c|}{42.0} & \multicolumn{1}{c|}{46.7} & \multicolumn{1}{c|}{{46.8}} & \multicolumn{1}{c}{\textbf{51.1}} \\ \hline
		\multicolumn{1}{c|}{COIL20} & \multicolumn{1}{c|}{59.0} & \multicolumn{1}{c|}{46.2} & \multicolumn{1}{c|}{63.1} & \multicolumn{1}{c|}{64.5} & \multicolumn{1}{c|}{\textbf{69.1}} & \multicolumn{1}{c}{{67.9}} \\ \hline		\multicolumn{1}{c|}{CSTR}   & \multicolumn{1}{c|}{65.0} & \multicolumn{1}{c|}{70.0} & \multicolumn{1}{c|}{68.9} & \multicolumn{1}{c|}{72.7} & \multicolumn{1}{c|}{{71.0}} & \multicolumn{1}{c}{\textbf{76.3}}\\ \hline
		\multicolumn{1}{c|}{AR}     & \multicolumn{1}{c|}{24.2} & \multicolumn{1}{c|}{35.0} & \multicolumn{1}{c|}{36.1} & \multicolumn{1}{c|}{37.1} & \multicolumn{1}{c|}{{37.7}}  & \multicolumn{1}{c}{\textbf{39.0}}\\ \hline
	\end{tabular}
	\vspace{-1em}
\end{table}

The Gaussian kernel parameter is tuned by the grid $\{10^{-3},10^{-2},10^{-1},10^0\}$.  We use the same affinity matrix $W$ as the input of SC, SSC \cite{lu2016convex}, SSC-PG and SSC-ADMM. The parameter $\beta$ in SSC, SSC-PG and SSC-ADMM is searched from $\{10^{-4},10^{-3},10^{-2}\}$. We further compare the these methods with k-means and Nonnegative Matrix Factorization (NMF).   Under each parameter setting of each method mentioned above, we repeat the clustering for 20 times, and compute the average result. We report the best average accuracy  for each method in Table \ref{tab_otherdatsets}. 

From Table \ref{tab_otherdatsets}, we have the following observations. First, it can be seen that the SSC models (SSC, SSC-PG and SSC-ADMM) improve the traditional SC and our SSC-ADMM  achieves the best performances in most cases. Second, this experiment not only verifies the superiority  of SSC over SC, but also shows the importance of the nonconvex SSC and the effectiveness of our solver. Third, this experiment implies that the nonconvex SSC improves the convex SSC  when given the dense  affinity matrix constructed by    the Gaussian kernel which is different from the sparse $\ell_1$-graph in the first experiment.  Beyond the accuracy, we further compare the efficiency of SSC-PG and SSC-ADMM which use different solvers for the equivalent nonconvex SSC model. Figure \ref{fig_timecomp} gives the average running time of both methods. It can be seen that SSC-ADMM is much more efficient than SSC-PG. The main reason behind is that SSC-PG has to construct a relatively loose majorant surrogate for $g$ in each iteration \cite{mairal2013optimization} and thus requires many more (usually more than $1,000$) iterations. Note that the same phenomenon appears in the convex optimization \cite{lin2010augmented}.

\begin{figure}[!t]
	\centering
	\includegraphics[width=0.32\textwidth]{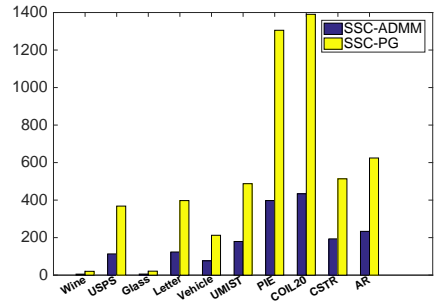}	
	\caption{\small Comparison of the average running time (in second) of SSC-ADMM and SSC-PG on 10 datasets.}\label{fig_timecomp}
\end{figure}

\section{Conclusion and Future Work}
This paper  addressed the loose convex relaxation issue of   SSC  proposed in \cite{lu2016convex}. We proposed to use   ADMM to solve the nonconvex SSC problem (\ref{pro}) directly instead of the convex relaxation. More importantly, we provided the convergence guarantee of ADMM for such a nonconvex problem which is challenging and has not been addressed before. It is worth mentioning that our ADMM method and analysis allow the stepsize to be monotonically increased (but upper bounded). Such a technique has been verified to be effective in improving the convergence in practice for convex optimization and this is the first work which introduces it to ADMM for nonconvex optimization. Also, our convergence guarantee generally requires relatively weak assumptions, e.g., no assumption on the iterates and the subproblems are not necessarily to be strongly convex. Thus it is more practical and can be used to solve other related problems. Beyond the convergence guarantee, we also gave some experimental studies to  verify our analysis and the clustering results on several real datasets demonstrated the improvement of nonconvex SSC over convex SSC.

There have some interesting future works. Though the solved nonconvex problem in this work is specific, the problems with nonconvex  orthogonal constraint  are interesting and such a nonconvex constraint  or related ones appear in many models in component analysis. It will be   interesting to apply ADMM to solve other problems with similar constraints and provide the convergence guarantee.
It will be also interesting to apply our technique to solve some other nonconvex low rank minimization models in \cite{lu2016nonconvex,lu2015generalized}.

\section{Acknowledgements}
J. Feng is partially supported by National University of Singapore
startup grant R-263-000-C08-133 and Ministry of Education of
Singapore AcRF Tier One grant R-263-000-C21-112. Z. Lin is
supported by National Basic Research Program of China (973 Program) (Grant no. 2015CB352502) and National Natural Science
Foundation (NSF) of China (Grant nos. 61625301 and 61731018).

{
	
	\bibliographystyle{aaai}
	\bibliography{bibfile}
}

\newpage
\onecolumn
~\
\begin{center}
	\LARGE\textbf{Supplementary Material}
\end{center}



\section*{Proof of Lemma \ref{lem3}}
\begin{proof} \textbf{Proof of (a).}
	We deduce
	\begin{align}
	&\Lag(\Pkk,\Ukk,\Ykk,\mukk) - \Lag(\Pk,\Uk,\Yk,\muk) \notag \\ 
	= & \Lag(\Pkk,\Ukk,\Ykk,\mukk) - \Lag(\Pkk,\Ukk,\Yk,\muk) \notag\\
	& + \Lag(\Pkk,\Ukk,\Yk,\muk)  - \Lag(\Pk,\Uk,\Yk,\muk).\label{lem3prf1}
	\end{align}
	Consider the first two terms in (\ref{lem3prf1}), we have
	\begin{align}
	& \Lag(\Pkk,\Ukk,\Ykk,\mukk) - \Lag(\Pkk,\Ukk,\Yk,\muk) \notag \\
	=  & \inproduct{\Ykk-\Yk}{\Pkk-\UUkk}  \notag \\
	& + \frac{\mukk-\muk}{2} \norm{\Pkk-\UUkk}^2 \notag\\
	\overset{\text{\ding{172}}}{=} & \left( \frac{1}{\muk} + \frac{\mukk-\muk}{2\muk^2} \right) \norm{\Ykk-\Yk}^2\notag \\
	\overset{\text{\ding{173}}}{\leq} & \frac{\rho+1}{2\muk} \norm{\Ykk-\Yk}^2 
	\overset{\text{\ding{174}}}{\leq}  \frac{l^2(\rho+1)}{2\muk} \norm{\Pkk-\Pk}^2,   \label{lem3prf3}
	\end{align}
	where \ding{172} uses (\ref{updateY}), \ding{173} uses the fact $\mukk \leq \rho \muk$ due to (\ref{updatemu}), and \ding{174} uses (\ref{boundYdiff}). 
	
	Now, let us bound the last two terms in  (\ref{lem3prf1}). By the optimality of $\Ukk$ to problem (\ref{updateU}), we have
	\begin{equation} \label{lem3prf44}
	\Lag(\Pk,\Ukk,\Yk,\muk)   \leq  \Lag(\Pk,\Uk,\Yk,\muk). 
	\end{equation}
	Consider the optimality of $\Pkk$ to problem (\ref{updateP}), note that $\Lag(P,\Ukk,\Yk,\muk)$ is strongly convex with modulus $\gammak$, we have
	\begin{align}
	&\Lag(\Pkk,\Ukk,\Yk,\muk)\notag \\
	\leq &  \Lag(\Pk,\Ukk,\Yk,\muk) - \frac{\gammak}{2}\norm{\Pkk-\Pk}^2, \label{lem3prf5}
	\end{align}
	where we uses the Lemma B.5  in \cite{mairal2013optimization}.

	Combining (\ref{lem3prf1})-(\ref{lem3prf5}) leads to
	\begin{align*}
	& \Lag(\Pkk,\Ukk,\Ykk,\mukk) - \Lag(\Pk,\Uk,\Yk,\muk)   \notag \\ 
	\leq & - \left( \frac{\gammak}{2} - \frac{l^2(\rho+1)}{2\muk}  \right) \norm{\Pkk-\Pk}^2.  
	\end{align*}
	By the choice of $\muk$ and $\gammak$ in assumption A3 and (\ref{profdecresL}), we can see that $\Lag(\Pk,\Uk,\Yk,\muk)$ is monotonically decreasing.
	
	\textbf{Proof of (b).}
	To show that $\Lag(\Pk,\Uk,\Yk,\muk)$ converges to some constant $\Lag^*>-\infty$, we only need to show that $\Lag(\Pk,\Uk,\Yk,\muk)$ is lower bounded. Indeed,
	\begin{align*}
	& \Lag(\Pkk,\Ukk,\Ykk,\mukk) \\
	= & \inproduct{L}{\UUkk} + g(\Pkk) + \inproduct{\Ykk}{\Pkk-\UUkk} \notag \\
	&+ \frac{\mukk}{2} \norm{\Pkk-\UUkk}^2\\ 
	\overset{\text{\ding{175}}}{=} & \inproduct{L}{\UUkk} + \inproduct{\nabla g(\Pkk)}{\UUkk-\Pkk} \notag\\
	& + g(\Pkk) + \frac{\mukk}{2} \norm{\Pkk-\UUkk}^2\\
	\overset{\text{\ding{176}}}{\geq} & \inproduct{L}{\UUkk} + g(\UUkk).
	\end{align*}
	where \ding{175} uses (\ref{optcondP})  and \ding{176} uses the Lipschitz continuous gradient property of $g$ and $\mukk\geq l$ by assumption A3.
	Note that $\inproduct{L}{\UUkk} \geq0$ since $L\succeq 0$ by assumption A1. This combines with the lower bounded assumption of $g$ in assumption A2 implies that $L(\Pkk,\Ukk,\Ykk,\mukk)$ is lower bounded.
	
	\textbf{Proof of (c).} Summing over both sides of (\ref{profdecresL}) from 0 to $+\infty$ leads to
	\begin{align*}
	& \sum_{k=0}^{+\infty} \left( \frac{\gammak}{2} - \frac{l^2(\rho+1)}{2\muk}  \right) \norm{\Pkk-\Pk}^2 \\
	\leq &\Lag(P_0,U_0,Y_0,\mu_0) - \Lag^*.
	\end{align*}
	This implies that $\Pkk-\Pk\rightarrow0$ under assumption A3. Thus $\Ykk-\Yk\rightarrow0$ due to (\ref{boundYdiff}). Finally, $\Pkk-\UUkk = \frac{1}{\muk}(\Ykk-\Yk)\rightarrow0$ since $\muk$ is bounded ($\mu_0\leq\muk\leq \mu_{\max}$).
	
	\textbf{Proof of (d).} First, it is obvious that $\{\Uk\}$ is bounded due to the constraint $U_k^\top U_k = I$.  Thus, $\UUk$ is bounded. Then, we deduce
	\begin{align*}
	\norm{\Pk} = \norm{\Pk-\UUk+\UUk} \leq \norm{\Pk-\UUk} + \norm{\UUk}. 
	\end{align*}
	Note that $\norm{\Pk-\UUk}$ is   bounded since $\Pk-\UUk\rightarrow0$. Hence, $\{\Pk\}$ is bounded. Considering that $\nabla g(P)$ is Lipschitz continuous, i.e., $\norm{\nabla g(P_{k_1})-\nabla g(P_{k_2})}\leq l \norm{P_{k_1}-P_{k_2}}$, this implies that $\nabla g(\Pk)$ is bounded. Thus, $\{\Yk\}$ is bounded due to (\ref{optcondP}).
	
	\textbf{Proof of (e).} First, from the optimality of $\Ukk$ to problem (\ref{updateU}), there exists $G_O\in\partial_U\iota_O(\Ukk)$ such that 
	\begin{align*}
	&{\partial_U  \Lag(\Pk,\Ukk,\Yk,\muk)} + G_O  \\
	= &2 (L-\Yk-\muk\Pk)\Ukk + G_O =0.
	\end{align*} 
	Thus, accordingly, there exists $G_U$ such that 
	\begin{align}
	&G_U = {\partial_U \Lag(\Pkk,\Ukk,\Ykk,\muk)}  + G_O  \notag\\
	= & 2 (L-\Ykk-\muk\Pkk)\Ukk + G_O \notag \\
	= & 2 (L-\Yk-\muk\Pk)\Ukk + G_O + 2(\Yk-\Ykk)\Ukk \notag \\
	&+2\muk(\Pk-\Pkk)\Ukk \notag\\
	= & 2((\Yk-\Ykk)+\muk(\Pk-\Pkk))\Ukk. \label{profeu}
	\end{align}
	Second, by using the optimality  of $\Pkk$  given in (\ref{optcondPP}), we have
	\begin{align}
	&G_P =  {\partial_P \Lag(\Pkk,\Ukk,\Ykk,\muk)}  \notag\\ 
	= & \nabla g(\Pkk) + \Ykk + \muk(\Pkk-\UUkk) \notag\\
	= & \nabla g(\Pkk) + \Yk + \muk(\Pkk-\UUkk) + \Ykk-\Yk \notag\\
	= & \Ykk-\Yk. \label{profep}
	\end{align} 
	Third, by direction computation, we have
	\begin{align}
	& G_Y =  {\partial_Y \Lag(\Pk,\Ukk,\Yk,\muk)} \notag \\
	= & \Pkk-\UUkk
	= \frac{1}{\muk}(\Ykk-\Yk).\label{profey}
	\end{align}
	Finally, combing (\ref{profeu})-(\ref{profey}), we obtain
	\begin{align*}
	&\norm{G}^2 =\norm{[G_P \  G_U \ G_Y]}^2 \\
	\leq & \norm{2((\Yk-\Ykk)+\muk(\Pk-\Pkk))\Ukk}^2 \notag\\
	&+ (1+\frac{1}{\muk^2}) \norm{\Ykk-\Yk}^2 \\
	\leq & (8d+1+\frac{1}{\muk^2})\norm{\Yk-\Ykk}^2 + 8d\muk^2\norm{\Pk-\Pkk}^2\\
	\leq & (8d+1+\frac{1}{\mu_0^2})\norm{\Yk-\Ykk}^2 + 8d\mu_{\max}^2\norm{\Pk-\Pkk}^2.
	\end{align*}
	The proof is completed.
\end{proof}

\end{document}